\setlist{topsep=2pt, itemsep=2pt}
\crefname{subsection}{Subsection}{Subsections}
\crefname{equation}{}{}
\crefname{theo}{Theorem}{Theorems}
\crefname{coro}{Corollary}{Corollaries}
\crefname{prop}{Proposition}{Propositions}
\crefname{lemm}{Lemma}{Lemmas}
\crefname{exam}{Example}{Examples}
\crefname{assum}{Assumption}{Assumptions}
\newtheorem{theo}{Theorem}[section]
\newtheorem{prop}[theo]{Proposition}
\theoremstyle{definition}
\newtheorem{defi}[theo]{Definition}
\newtheorem{assum}[theo]{Assumption}
\newtheorem{rema}[theo]{Remark}
\numberwithin{equation}{section}
\newcommand{\bR}{\mathbb R}
\newcommand{\bN}{\mathbb N}
\newcommand{\bE}{\mathbb E}
\newcommand{\bP}{\mathbb P}
\newcommand{\bF}{\mathbb F}
\newcommand{\1}{\mathbbm{1}}
\newcommand{\bfL}{\mathbf L}
\newcommand{\bfh}{\mathbf h}
\newcommand{\cB}{\mathcal B}
\newcommand{\cF}{\mathcal F}
\newcommand{\cL}{\mathcal L}
\newcommand{\cX}{\mathcal X}
\newcommand{\rmF }{\mathrm{F}}
\newcommand{\tran}{\mathsf{T}}
\newcommand{\e}{\mathrm{e}}
\newcommand{\Leb}{\bm{\lambda}}
\newcommand{\pd}{\partial}
\newcommand{\od}{\mathrm{d}}
\newcommand{\nq}{\!\!}
\newcommand{\gamI}{\gamma}
\newcommand*\bigcdot{\mathpalette\bigcdot@{.35}}
\newcommand*\bigcdot@[2]{\mathbin{\vcenter{\hbox{\scalebox{#2}{$\m@th#1\bullet$}}}}}
\newcommand{\trm}[1]{\textrm{#1}}
\def\b{\big}
\def\bb{\bigg}
\def\<{\left<}
\def\>{\right>}
\begin{document}

\title{On the grid-sampling limit SDE}

\author[Christian Bender]{Christian Bender$^{1}$}
\address{$^1$Department of Mathematics, Saarland University, Germany}
\email{bender@math.uni-saarland.de}

\author[Nguyen Tran Thuan]{Nguyen Tran Thuan$^{1,2}$}
%\address{$^1$Department of Mathematics, Saarland University, Germany}
\email{nguyen@math.uni-saarland.de}
\address{$^2$Department of Mathematics, Vinh University, 182 Le Duan, Vinh, Nghe An, Viet Nam}
\email{thuannt@vinhuni.edu.vn}

\thanks{}

\date{October 10, 2024}
%\subjclass[2020]{Primary: ; Secondary: }
%\keywords{}

\begin{abstract}
In our recent work \cite{BN24} we introduced the grid-sampling SDE as a proxy for modeling exploration in continuous-time reinforcement learning. In this note, we provide further motivation for the use of this SDE and discuss its wellposedness in the presence of jumps.

	\bigskip
	
	\noindent \textbf{Keyworks.} Exploration; Orthogonal martingale measures; Poisson random measures; Reinforcement learning, Stochastic differential equations. 
	
	\smallskip
	
	\noindent \textbf{2020 Mathematics Subject Classification.} Primary:  60H10; Secondary: 60G57, 93E35.
\end{abstract}

\maketitle

%\today

\section{Introduction}

Initiated by works of Zhou and coauthors \cite{JZ22a,JZ22, JZ23, WZZ20, WZ20}, a theoretical framework and algorithms for reinforcement learning (RL) have been developed for controlled stochastic differential equations (SDEs) in continuous time, see, e.g., \cite{BN23, BN24,DDJ23, FJ22, FGLPS23,  GLZ24, GXZ22, HWZ23, RZ21, STZ23} for some of the recent literature in the field. The design of learning algorithms necessitates the modeling of responses of the SDE system to the execution of randomized controls. This problem led the authors of \cite{JZ22} to introduce the \emph{sample state process}, see also \cite{FGLPS23, GLZ24}. In \cref{sec:Fubini}, we elaborate on the presentation of the sample state process in \cite{FGLPS23} in a simplified setting of drift control with additive noise. We argue that, in this situation, the sampling formulation based on the theory of rich Fubini extensions \cite{Su06, SZ09} has no proper interpretation as response of the original SDE system to a randomized control. The key issue is that the sampling via an uncountable family of (essentially) pairwise independent random variables leads to an averaging effect by Sun's exact law of large numbers in  \cite{Su06}.  In order to avoid this averaging effect, we exploit an idea in \cite{STZ23} and restrict the sampling to a discrete-time grid which then leads to the \textit{grid-sampling SDE}. In \cref{sec:grid-sampling}, we provide sufficient conditions for the wellposedness of the grid-sampling SDE driven by a multidimensional Brownian motion and a Poisson random measure. We, then, show that the solution to this grid-sampling SDE also solves an SDE with deterministic coefficients driven by suitably chosen random measures. This random measure formulation of the grid-sampling SDE has been motivated in our recent work \cite{BN24}, and the result stated in \cref{sec:grid-sampling} makes the related results in \cite{BN24} more precise. In \cref{sec:limit}, we first recall the limit theorem of \cite{BN24}, which leads to the \textit{grid-sampling limit SDE}. This SDE models the limit dynamics of the grid-sampling SDE as the mesh-size of the sampling partition goes to zero. Finally, we prove the strong existence and uniqueness for this SDE under appropriate Lipschitz and integrability conditions on the coefficients. 

Throughout this note, we apply standard notation and refer to the ``notations'' section in \cite{BN24} for more information.

\section{The sample state process and rich Fubini extensions}\label{sec:Fubini}

In this section, we focus on the case of drift control with additive noise given in terms of a one-dimensional Brownian motion $B$. We discuss how to make sense of an SDE of the form
\begin{equation}\label{eq:sample_formally}
	Y_t=y_0+\int_0^t b(s,Y_s,\bfh(s,Y_s,\xi_s))\od s +\sigma B_t, \quad t \in [0, T],
\end{equation}
where, ideally, $(\xi_t)_{t\in [0,T]}$ is a family of independent random variables, which are uniformly distributed on the unit interval. Moreover, $\xi=(\xi_t)_{t\in [0,T]}$ is assumed to be independent of the Brownian motion $B$. 

Here, we assume that  $b\colon [0,T]\times \bR\times \bR\rightarrow \bR$ and  ${\bfh} \colon [0,T]\times \bR\times\bR\rightarrow \bR$ are measurable with respect to the standard Borel $\sigma$-fields, and $y_0\in \bR$. In  \eqref{eq:sample_formally}, we think of $\bfh$ as the \textit{randomized feedback control} associated with a \textit{(feedback) relaxed} control $h$ defined on $[0, T] \times \bR $ with values in the family of probability distributions on  $\bR$. Based on the observation $Y_s$, the actor first chooses the probability distribution $h(s,Y_s)$ depending on $(s,Y_s)$. Now, if $\bfh(s,y,u)$ denotes, e.g., the quantile function of the distribution $h(s,y)$,  then $\bfh(s,Y_s,\xi_s)$ is a sample drawn from   $h(s,Y_s)$,  and the drawing mechanism based on $\xi_s$ is independent of the past information generated by $(B_t,\xi_t)_{0\leq t< s}$. The key technical issue is that the process  $\xi \colon [0, T] \times \Omega \to \bR$ cannot be $\cB([0,T]) \otimes \cF/\cB(\bR)$-measurable, see, e.g., Proposition 2.1 in \cite{Su06}, and so the integral in \eqref{eq:sample_formally} is not well-defined.

To overcome this problem, several authors, e.g., \cite{JZ23, FGLPS23}, have pointed to  the framework of rich Fubini extensions by \cite{Su06} for defining the sample state process as the solution of a suitable reformulation of \eqref{eq:sample_formally}, even beyond the case of drift control. Our discussion below elaborates the one in \cite{FGLPS23}. For simplicity, we let $T = 1$ but still write $[0, T]$ and $\int_0^T$ instead of $[0, 1]$ and $\int_0^1$, respectively, to distinguish the time- and space-variables.

According to \cite[Theorem 1]{SZ09}, there exist an extension $([0,T], \Lambda,\rho)$ of the Lebesgue probability space $([0,T],\cB([0,T]),\Leb_{[0, T]})$ and some probability space $(\Omega_1,\cF_1,\bP_1)$ such that the product space $([0,T] \times \Omega_1  , \Lambda \otimes \cF_1, \rho \otimes \bP_1)$ has   a rich Fubini extension  $([0,T] \times \Omega_1  , \Lambda \boxtimes \cF_1, \rho \boxtimes \bP_1)$, i.e., the following properties hold:
\begin{enumerate}
	\item There is a $\Lambda \boxtimes \cF_1/\mathcal{B}(\bR)$-measurable process $\xi \colon [0,T] \times \Omega_1 \rightarrow \bR$ such that,  for $\rho$-a.e. $t\in [0,T]$, $\xi_t$ is uniformly distributed on $[0,1]$ and independent of $\xi_s$ for   $\rho$-a.e. $s\in [0,T]$.
	\item For any $\rho \boxtimes \bP_1$-integrable function $F$, iterated integration is meaningful and
	\begin{align*}
&	\int_{[0, T] \times \Omega_1} F(t, \omega_1) (\rho\boxtimes \bP_1)(\od t, \od \omega_1) \\
& \quad = \int_{\Omega_1}  \bb(\int_0^T F(t, \omega_1)\rho(\od t)\bb) \bP_1 (\od \omega_1)= \int_0^T \bb(\nq \int_{\Omega_1} F(t, \omega_1) \bP_1(\od \omega_1)\bb)\rho(\od t),
		\end{align*}
\end{enumerate}
see \cite[Definition 3]{SZ09} or \cite[Definition 2.2]{Su06} for the complete statement of (2).

Moreover, we denote by $(\Omega_2,\cF_2,\bP_2)$ a probability space, which carries a one-dimensional Brownian motion $B$ with respect to its own filtration. We consider the usual product space $(\Omega,\cF,\bP) :=(\Omega_1\times \Omega_2,\cF_1\otimes \cF_2,\bP_1\otimes \bP_2)$
and extend $\xi$ and $B$ to mappings on $[0,T]\times \Omega_1\times\Omega_2$ by setting $\xi_t(\omega_1,\omega_2)=\xi_t(\omega_1)$ and $B_t(\omega_1,\omega_2)=B_t(\omega_2)$, respectively. Then, there is a $\rho$-null set $N_\rho \in \Lambda$  such that for every $t\in [0,T]\backslash N_\rho$, $\xi_t$ is a random variable (i.e., $\cF/\cB(\bR)$-measurable) and, by the product construction, the families $(\xi_t)_{t\in [0,T]\backslash N_\rho}$ and $(B_t)_{t\in [0,T]}$ are independent. Note that $\xi$ ``almost'' satisfies the properties required for the ideal sampling procedure mentioned above. However,   $\xi$ is not  $\mathcal{B}([0,T])\otimes \cF$-measurable and, hence, not predictable in the usual sense. Instead, $\xi$ only satisfies the weaker measurability property with respect to the larger $\sigma$-field $(\Lambda \boxtimes \cF_1)\otimes\cF_2$. As stochastic integration of stochastic processes with this measurability property is beyond the scope of the classical It\^o calculus, we only consider the drift control case. The following SDE is then a proper reformulation of \eqref{eq:sample_formally} in the framework of rich Fubini extensions:
\begin{equation}\label{eq:SDE_Fubini}
	Y_t=y_0+\int_0^t b(s,Y_s,\bfh(s,Y_s,\xi_s))\rho(\od s) +\sigma B_t, \quad t \in [0, T].
\end{equation}
We first motivate the notion of a solution to this equation.
Since $\rho(\{s\})=\Leb_{[0,T]}(\{s\})=0$ for every $s\in [0,T]$, integrals with respect to $\rho$ are continuous as functions in the upper integration limit. Therefore, a  solution $Y$ to \eqref{eq:SDE_Fubini} should have continuous paths and, then, should be $\mathcal{B}([0,T])\otimes \cF$-measurable. Note that for a process $Y$ with these measurability properties the maps
$$
(s,\omega) \mapsto  b(s,Y_s,\bfh(s,Y_s,\xi_s)), \quad (s,u,\omega) \mapsto  b(s,Y_s,\bfh(s,Y_s,u))
$$
are $(\Lambda \boxtimes \cF_1)\otimes\cF_2/\mathcal{B}(\bR)$ and $\mathcal{B}([0,T])\otimes \mathcal{B}(\bR) \otimes \cF/\mathcal{B}(\bR)$-measurable, respectively. By the definition of the Fubini extension, there is a ${\bP}$-null set $\mathcal{N}_0$ such that for every $\omega\in \Omega\backslash \mathcal{N}_0$, the map $s\mapsto  b(s,Y_s(\omega),\bfh(s,Y_s(\omega),\xi_s(\omega)))$ is $\Lambda/\cB(\bF)$-measurable, and so the integral in \eqref{eq:SDE_Fubini}  ``makes sense''.
\begin{defi}
	We say, a map $Y \colon [0,T]\times \Omega \rightarrow \bR$ is a \emph{solution} to \eqref{eq:SDE_Fubini}, if
	\begin{itemize}
		\item $Y$ is $\mathcal{B}([0,T])\otimes \cF$-measurable and has continuous paths;
		\item There is a ${\bP}$-null set $\mathcal{N}_0$ such that for every $\omega \in \Omega\backslash \mathcal{N}_0$, the map $$[0,T]\ni s \mapsto b(s,Y_s(\omega),\bfh(s,Y_s(\omega),\xi_s(\omega)))$$ is $\rho$-integrable and equation \eqref{eq:SDE_Fubini} is satisfied for every $(t,\omega)\in [0,T]\times (\Omega \backslash \mathcal{N}_0)$.
	\end{itemize}
\end{defi}

Since the function $t\mapsto \xi_t(\omega)$ is $\Lambda/\mathcal{B}(\bR)$-measurable for $\bP$-almost every $\omega\in \Omega$ by the definition of the Fubini extension, we can introduce the random measure
$$
\hat\rho(\omega;\od t, \od u) = \delta_{\xi_{t}(\omega)}(\od u) \rho(\od t)
$$
on $\Lambda\otimes \mathcal{B}(\bR)$.
If $Y$ is a solution to \eqref{eq:SDE_Fubini}, then, by (classical) Fubini's theorem, $Y$ satisfies
\begin{equation}\label{eq:SDE_Fubini_2}
	Y_t=y_0+\int_{(0,t]\times[0,1]} b(s,Y_s,\bfh(s,Y_s,u))\hat\rho(\od s,\od u) +\sigma B_t, \quad t \in [0, T]
\end{equation}
outside a $\bP$-null set.

The next theorem shows that the restriction of the random measure $\hat \rho$ to $\mathcal{B}([0,T])\otimes \mathcal{B}(\bR)$ is nothing but the Lebesgue measure  on $[0,T]\times[0,1]$.
\begin{theo}\label{lemm:coincide-measures}
	There is a ${\bP}$-null set $\mathcal{N}$ such that for every $\omega\in \Omega\backslash \mathcal{N}$ and $A\in \mathcal{B}([0,T])\otimes \mathcal{B}(\bR)$,
	$$
	\hat\rho(\omega;A)=(\Leb_{[0,T]}\otimes \Leb_{[0,1]})(A),
	$$
	where in slight abuse of notation we write $\Leb_{[0,1]}(B)=\Leb_{\bR}(B\cap [0,1])$ for $B\in \cB(\bR)$.
\end{theo}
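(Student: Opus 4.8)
The plan is to recognize that, restricted to $\mathcal{B}([0,T])\otimes\mathcal{B}(\bR)$, the measure $\hat\rho(\omega;\cdot)$ is the pushforward of $\rho$ under the map $t\mapsto(t,\xi_t(\omega))$, and then to show this pushforward equals the product Lebesgue measure by first checking equality on a countable generating $\pi$-system and afterwards invoking uniqueness of finite measures. Indeed, for every $\omega$ such that $t\mapsto\xi_t(\omega)$ is $\Lambda/\mathcal{B}(\bR)$-measurable (which holds off a $\bP$-null set $N_0$ by the Fubini extension), the set function $A\mapsto\hat\rho(\omega;A)=\int_0^T\1_A(t,\xi_t(\omega))\,\rho(\od t)$ is a genuine finite measure on $\Lambda\otimes\mathcal{B}(\bR)$ of total mass $\hat\rho(\omega;[0,T]\times\bR)=\rho([0,T])=T$, which matches the total mass of $\Leb_{[0,T]}\otimes\Leb_{[0,1]}$. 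Let $\mathcal{P}$ be the countable $\pi$-system of rectangles $P=C\times D$ with $C,D$ half-open intervals having rational endpoints; then $\mathcal{P}$ generates $\mathcal{B}([0,T])\otimes\mathcal{B}(\bR)$ and contains an increasing sequence exhausting $[0,T]\times\bR$.

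For a fixed $P=C\times D\in\mathcal{P}$ I would apply Sun's exact law of large numbers \cite{Su06} to the process $f(t,\omega_1):=\1_C(t)\,\1_D(\xi_t(\omega_1))$. This $f$ is $\Lambda\boxtimes\cF_1$-measurable, bounded (hence $\rho\boxtimes\bP_1$-integrable), and essentially pairwise independent: by property (1) of $\xi$, for $\rho$-a.e.\ $s$ the variable $\xi_s$ is independent of $\xi_t$ for $\rho$-a.e.\ $t$, and multiplying by the deterministic factors $\1_C$ preserves this. The exact law of large numbers then produces a $\bP_1$-null set $N_P$ such that, for $\omega_1\notin N_P$,
\begin{equation*}
\hat\rho(\omega_1;C\times D)=\int_C\1_D(\xi_t(\omega_1))\,\rho(\od t)=\int_C\bP_1(\xi_t\in D)\,\rho(\od t)=\Leb_{[0,1]}(D)\,\rho(C),
\end{equation*}
where the last equality uses that each $\xi_t$ is uniform on $[0,1]$. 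Since $\rho$ extends $\Leb_{[0,T]}$, the right-hand side equals $(\Leb_{[0,T]}\otimes\Leb_{[0,1]})(C\times D)$.

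Finally, I would set $\mathcal{N}:=\big(N_0\cup\bigcup_{P\in\mathcal{P}}N_P\big)\times\Omega_2$; as a countable union of $\bP_1$-null sets times $\Omega_2$, it is $\bP$-null. For $\omega\notin\mathcal{N}$ the two finite measures $\hat\rho(\omega;\cdot)$ and $\Leb_{[0,T]}\otimes\Leb_{[0,1]}$ agree on the $\pi$-system $\mathcal{P}$ and on the whole space $[0,T]\times\bR$, hence agree on $\sigma(\mathcal{P})=\mathcal{B}([0,T])\otimes\mathcal{B}(\bR)$ by the uniqueness theorem for measures (Dynkin's $\pi$--$\lambda$ argument). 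I expect the main obstacle to be the correct application of the exact law of large numbers: one must verify its hypotheses (the joint $\Lambda\boxtimes\cF_1$-measurability and essential pairwise independence of $f$) and, crucially, use that it removes the $\omega_1$-dependence of the time-integral along a \emph{single} null set per function. This is exactly what makes the reduction to the countable $\pi$-system $\mathcal{P}$ indispensable, since one cannot take a union of uncountably many null sets indexed by arbitrary Borel sets $A$.
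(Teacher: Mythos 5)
Your proof is correct and follows essentially the same route as the paper's: apply Sun's exact law of large numbers to rectangles, restrict attention to the countable family of rational rectangles so that only countably many null sets are involved, and conclude with Dynkin's $\pi$-$\lambda$ theorem. The only cosmetic difference is that you fold the time-indicator $\1_C(t)$ into the integrand, so no case distinction on whether $\Leb_{[0,T]}(C)>0$ is needed, whereas the paper restricts the domain of integration to $B$ and treats the case $\Leb_{[0,T]}(B)=0$ separately.
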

\begin{proof}
	Let $A=B\times C \in \mathcal{B}([0,T])\otimes \mathcal{B}(\bR)$. If  $\Leb_{[0,T]}(B)>0$, then, by Sun's exact law of large numbers \cite[Theorem 2.6]{Su06}, for $\bP$-almost every $\omega \in \Omega$,
	$$
	\hat\rho(\omega;A)= \int_B \1_{\{\xi_t(\omega)\in C\}}\rho(\od t)=\int_B \bE[\1_{\{\xi_t\in C\}}]\rho(\od t)=\Leb_{[0,1]}(C)\rho(B)=(\Leb_{[0,T]}\otimes \Leb_{[0,1]})(A).
	$$
	If  $\Leb_{[0,T]}(B)=0$, then obviously both sides of the previous equation are zero.
	Thus, we find a  ${\bP}$-null set $\mathcal{N}$ such that for every $\omega\in \Omega \backslash \mathcal{N}$, the measures $\hat\rho(\omega;\cdot)$ and $\Leb_{[0,T]}\otimes \Leb_{[0,1]}$ coincide on all Cartesian products of subintervals  with rational endpoints. Now, Dynkin's $\pi$-$\lambda$ theorem completes the proof.
\end{proof}
Since $(s,u) \mapsto  b(s,Y_s{(\omega)},\bfh(s,Y_s{(\omega)},u))$ is  $\mathcal{B}([0,T])\otimes \mathcal{B}(\bR)$-measurable for any $\omega\in \Omega$, \cref{lemm:coincide-measures} and \eqref{eq:SDE_Fubini_2} imply that every solution $Y$ to \eqref{eq:SDE_Fubini} solves
\begin{equation}\label{eq:SDE_Fubini_3}
	Y_t=y_0+\int_{0}^t \! \int_0^1 b(s,Y_s,\bfh(s,Y_s,u)) \od u \od s +\sigma B_t, \quad t \in [0, T]
\end{equation}
outside a $\bP$-null set. Note that  \eqref{eq:SDE_Fubini_3} coincides with the \textit{exploratory SDE} introduced in \cite{WZZ20}.
This argument shows that the sample SDE \eqref{eq:SDE_Fubini} and the exploratory SDE \eqref{eq:SDE_Fubini_3} are equivalent in the pathwise sense (and not just in law). In particular, if the SDE \eqref{eq:SDE_Fubini} is solvable then we can choose a version of the solution which does not depend on $\xi$. Therefore, a solution $Y$ of \eqref{eq:SDE_Fubini} has no proper meaning as response  to a sample drawn from a measure-valued control. This point may become even more transparent, if we consider the special case $b(t,y,z)=z$, $\bfh(t,y,u)=\Phi^{-1}(u)$ (where $\Phi$ is the cumulative distribution function of a standard Gaussian), i.e., the measure-valued control is (independent of time and state) a standard Gaussian distribution. Then, \eqref{eq:SDE_Fubini} takes the form
$$
Y_t=y_0+\int_0^t \Phi^{-1}(\xi_s) \rho(\od s) +\sigma B_t.
$$
Hence, applying Sun's law of large numbers again and noting that $\bE[\Phi^{-1}(\xi_s)]=0$ for  $\rho$-a.e. $s \in [0, T]$, we obtain
$$
Y_t=y_0+\sigma B_t,
$$
and any dependence on the ``randomization'' process $\xi$ has disappeared.

The bottom line is that the integral in \eqref{eq:SDE_Fubini} is just a technically and notationally {different} way to re-write integration with respect the Lebesgue measure on $[0,T]\times [0,1]$ and does not model the execution of controls with values in the set of probability measures.

\section{The grid-sampling SDE}\label{sec:grid-sampling}

The problems discussed in the previous section illustrate that control randomization based on  (essentially) pairwise independent families $\xi=(\xi_t)_{t\in [0,T]}$ of uniform random variables may not be suitable in continuous time. Motivated by \cite{STZ23}, we replaced $\xi$ by a piecewise constant interpolation of finitely many independent uniform random variables on a finite grid $\Pi$ of $[0,T]$ as in \cite{BN24}. In this section, we complement the discussion of the resulting grid-sampling SDE in \cite{BN24} by rigorous wellposedness results. 

Let $(\Omega,\cF,\bar\bF,\bP)$ be a filtered probability satisfying the usual conditions. We assume that this probability space carries a $p$-dimensional Brownian motion $B$ and an (inhomogeneous) Poisson random measure $N$ (independent of $B$) with intensity $\nu(\od t, \od z) = \nu_t(\od z) \od t$ on $[0,T]\times\bR^q_0$. Here, $(\nu_t(\od z))_{t \in [0, T]}$ is a transition kernel consisting of L\'evy measures, i.e., $\nu_t$ is a Borel measure on $\bR^q_0: = \bR^q \backslash\{0\}$ with $\int_{\bR^q_0} (|z|^2 \wedge 1) \nu_t(\od z) < \infty$, $t \in [0, T]$. For simplicity, we impose the square-integrability condition
\begin{align}\label{assumption:Levy-measure-r}
	\int_0^T \nq \int_{\bR^q_0} |z|^2 \nu_t(\od z)  \od t < \infty,
\end{align}
which corresponds to the case $\mathfrak{r}=\infty$ in \cite{BN24}. Denote by $\tilde N$  the compensated measure of $N$.

 Let $\Pi$ be a partition of $[0,T]$ with grid points $0=t_0<t_1<\cdots<t_n=T$ for some $n\in \bN$. We denote the mesh-size of $\Pi$ by $|\Pi|:=\max_{1 \le i \le n}|t_i-t_{i-1}|$ and suppose that the probability space carries an independent family $(\xi_1,\ldots, \xi_n)$ of uniforms on $[0,1]^d$ independent of $(B,N)$. For the control randomization on the grid, we define
the \textit{grid-sampling process} $\xi^\Pi = (\xi^\Pi_t)_{t \in [0, T]}$ by
\begin{align*}%\label{eq:xi_Pi}
	\xi^\Pi_t :=\sum_{j=1}^n \xi_j {\bf 1}_{(t_{j-1},t_j]}(t),\quad t\in [0,T].
\end{align*}
Denote by $\bF^\Pi = (\cF^\Pi_t)_{t \in [0, T]}$  the right-continuous, augmented version of the filtration generated by $(B,N,\xi^\Pi)$. Then, the process $\xi^\Pi$ is left-continuous and adapted and, thus, is $\bF^\Pi$-predictable.

In the general setting of this section, we assume that controls take values in $\bR^d$. Recall that, for every probability distribution $P$ on $\cB(\bR^d)$ there is a Borel-measurable function $H\colon [0,1]^d\rightarrow \bR^d$ such that the random variable $H(\eta)$ has distribution $P$ for any uniform random variable $\eta$ on the unit cube $[0,1]^d$ . Hence, the following $\bR^m$-valued \textit{grid-sampling SDE} for a \textit{$\xi^\Pi$-randomized (feedback) policy} ${\bfh}\colon [0, T] \times \bR^m \times [0, 1]^d \to \bR^d$ can be considered as the execution of a probability measure-valued feedback control in a setting with controlled drift, diffusion, and jumps, where the randomization is performed by the grid-sampling process $\xi^\Pi$:
 \begin{align}\label{eq:SDE-Pi}
	\od  X^{\Pi,\bfh}_t & = b(t,  X^{\Pi,\bfh}_{t-}, {\bfh}(t, X^{\Pi,\bfh}_{t-},\xi^{\Pi}_t )) \od t + a(t,  X^{\Pi,\bfh}_{t-},  {\bfh}(t, X^{\Pi,\bfh}_{t-},\xi^{\Pi}_t )) \od B_t \notag \\
	& \quad + \int_{\bR^q_0} \gamI(t,  X^{\Pi,\bfh}_{t-},  {\bfh}(t, X^{\Pi,\bfh}_{t-},\xi^{\Pi}_t ), z) \tilde N(\od t, \od z),
\end{align}
with initial condition $X^{\Pi,\bfh}_0=x\in \bR^m$ and measurable coefficients $b \colon [0, T] \times \bR^m \times \bR^d \to \bR^m$, $a \colon [0, T] \times \bR^m \times \bR^d \to \bR^{m \times p}$, $\gamma \colon [0, T] \times \bR^m \times \bR^d \times \bR^q_0 \to \bR^m$.

For $\bfh$ as above, we define the functions $b_\bfh \colon [0,T]\times \bR^m\times [0,1]^d \rightarrow \bR^m$, $a_\bfh \colon [0,T]\times \bR^m\times [0,1]^d \rightarrow \bR^{m\times p}$, and $\gamma_\bfh \colon [0,T]\times \bR^m\times [0,1]^d\times \bR^q_0 \rightarrow \bR^{m}$
via
\begin{align*}%\label{eq:ceoff}
	& b_\bfh(s, x, u): = b(s, x, \bfh(s, x, u)), \quad 	a_\bfh(s, x, u): = a(s, x, \bfh(s, x, u)),\notag\\
	&	\gamma_\bfh(s, x, u, z): = \gamma(s, x, \bfh(s, x, u), z).
\end{align*}
 Assuming that these functions are Borel-measurable, the predictability of $\xi^\Pi$ implies that the random fields  $(b_\bfh(s, x, \xi^{\Pi}_s))$, $(a_\bfh(s, x, \xi^{\Pi}_s))$, and $(\gamma_\bfh(s, x, \xi^{\Pi}_s,z))$ indexed by $(s, x) \in [0, T] \times \bR^m$ and $(s, x,z) \in [0, T] \times \bR^m\times \bR^q_0$, respectively, are $\bF^\Pi$-predictable as well. Hence, all integrals in \eqref{eq:SDE-Pi} can be considered in the classical sense and \eqref{eq:SDE-Pi} has a sound interpretation as response to a randomized control. 
 
 \begin{assum}\label{ass:Lip_grid}
 	The coefficient functions $b_\bfh$, $a_\bfh$, $\gamma_\bfh$ are Borel-measurable and satisfy the following integrability condition: The function
 	\begin{align*}
 		G_0(s) := \int_{[0, 1]^d} \bb[ |b_\bfh(s, 0, u)|^2  +  \|a_\bfh(s, 0, u)\|_{\rmF}^2 +  \int_{\bR^q_0} |\gamma_\bfh(s, 0, u, z)|^2 \nu_s(\od z)\bb] \od u
 	\end{align*}
 	takes finite values for all $s \in [0, T]$ and $G_0 \in \bfL^1([0, T], \Leb_{[0, T]})$, where 
 		\begin{align*}
 			\|a_\bfh\|_{\rmF} : = \sqrt{\mathrm{trace}[a_\bfh^\tran a_\bfh]}
 		\end{align*}
 	denotes the Frobenius norm.
 	 Moreover, $b_\bfh$, $a_\bfh$, $\gamma_\bfh$ are Lipschitz continuous in the space variable $x$ in the following sense: There is a constant $K_{\textnormal{Lip}} \ge 0$ independent of $s$ and $u$ such that the following conditions hold for any $s \in [0, T]$, $u\in [0,1]^d$, and $x_1, x_2 \in \bR^m$:  
 	\begin{align*}
 		& |b_\bfh(s, x_1, u) - b_\bfh(s, x_2, u)|^2+ \|a_\bfh(s, x_1, u) - a_\bfh(s, x_2,u)\|_{\rmF}^2 \\
 		 &\quad +\int_{\bR^q_0} |\gamma_\bfh(s, x_1, u, z) - \gamma_\bfh(s, x_2, u, z)|^2\nu_s(\od z)   \leq K_{\textnormal{Lip}}|x_1 - x_2|^2.
 	\end{align*}
 \end{assum}

 \begin{theo}
 	Under \cref{ass:Lip_grid}, the grid-sampling SDE for policy ${\bfh}$ has a unique (up to an indistinguishability)  strong solution $X^{\Pi,\bfh}$, for any choice of the partition $\Pi$.  Moreover, there is a constant independent of the initial condition $x$ and the grid $\Pi$ such that
 	\begin{align*}
 		\bE\bb[ \sup_{0 \le t \le T} |X^{\Pi,\bfh}_t|^2\bb] \le K(1 + |x|^2).
 	\end{align*}
 Furthermore, the strong solution $X^{\Pi,\bfh}$ also solves the  SDE
 \begin{align}\label{eq:SDE-random_measure_Pi}
 	X^{\Pi,\bfh}_t = x &+ \int_{(0,t]\times[0,1]^d}  b_\bfh(s, X^{\Pi,\bfh}_{s-}, u)  M^\Pi_D (\od s, \od u)+ \sum_{l=1}^p \int_{(0,t]\times[0,1]^d}  a_\bfh^{(\cdot,l)}(s, X^{\Pi,\bfh}_{s-}, u) M^\Pi_{B^{(l)}}(\od s,\od u)  \notag\\ 
 	&+  	\int_{(0,t]\times \bR^q_0 \times[0,1]^d} \gamI_{\bfh}(s, X^{\Pi,\bfh}_{s-},  u, z)\tilde M^\Pi_{J}(\od s,\od z,\od u), \quad t \in [0, T],
 \end{align}
driven by the following random measures:
\begin{align*}
	&M_D^\Pi(\omega,\od t,\od u) := \sum_{i=1}^n  \1_{(t_{i-1}, t_i]}(t) \delta_{\xi^\Pi_{t_i}(\omega)}(\od u)\od t,\\
	&M_{B^{(l)}}^\Pi(\omega,t,A) := \bb(\int_0^{t}  \sum_{i=1}^n \1_{(t_{i-1},{t_i}]}(s) \1_A(\xi^\Pi_{t_i})\; \od B^{(l)}_s \bb)(\omega),\quad A\in  \cB([0,1]^d),\;t\in [0,T],\; l=1,\ldots, p,\\
	&M_J^\Pi(\omega,\od t,\od z,  \od u) := \sum_{i=1}^n  \sum_{t\in (t_{i-1},t_i]} \1_{\{\Delta L_t(\omega)\neq  0\}} \delta_{(t,\Delta L_t(\omega),\xi^\Pi_{t_i}(\omega))}(\od t,\od z, \od u),
\end{align*}
where $\delta_y$ denotes the Dirac distribution on the point $y$ and $L_t:=\int_{(0,t]\times \bR^q_0} z \tilde N(\od s,\od z)$; here,  $M^\Pi_{B^{(l)}}$ are orthogonal martingale measures with intensity measure $M_D^\Pi$, and $M^\Pi_J$ is an integer-valued random measure with predictable compensator measure
$$
\mu_J^\Pi(\omega,\od t,\od z,  \od u) := \sum_{i=1}^n  \1_{(t_{i-1}, t_i]}(t) \delta_{\xi^\Pi_{t_i}(\omega)}(\od u) \nu_t(\od z) \od t
$$
and  corresponding compensated jump measure $\tilde M^\Pi_J := M_J^\Pi-\mu_J^\Pi$.
  \end{theo}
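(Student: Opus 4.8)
The plan is to treat the grid-sampling SDE \eqref{eq:SDE-Pi} as a classical It\^o SDE driven by $B$ and $\tilde N$ with \emph{random, predictable} coefficients obtained by freezing the randomization, and then to invoke standard strong wellposedness theory. Abbreviating $X := X^{\Pi,\bfh}$, I would set $\tilde b(s,x) := b_\bfh(s,x,\xi^\Pi_s)$, $\tilde a(s,x) := a_\bfh(s,x,\xi^\Pi_s)$ and $\tilde\gamma(s,x,z) := \gamma_\bfh(s,x,\xi^\Pi_s,z)$. Since $\xi^\Pi$ is $\bF^\Pi$-predictable and $b_\bfh,a_\bfh,\gamma_\bfh$ are Borel, these are $\cP\otimes\cB(\bR^m)$-measurable (with the extra $z$-variable where relevant), and the space-Lipschitz bound of \cref{ass:Lip_grid} transfers verbatim with the same constant $K_{\textnormal{Lip}}$, uniformly in $(s,\omega)$. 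For the integrability at $x=0$, the key observation is that $\xi^\Pi_s$ is, for each fixed $s$, uniformly distributed on $[0,1]^d$; hence $\bE\big[|\tilde b(s,0)|^2 + \|\tilde a(s,0)\|_\rmF^2 + \int_{\bR^q_0}|\tilde\gamma(s,0,z)|^2\nu_s(\od z)\big] = G_0(s)$, which lies in $\bfL^1([0,T])$. Strong existence and uniqueness up to indistinguishability then follow from the standard fixed-point argument for SDEs with jumps under random Lipschitz coefficients, using the square-integrability \eqref{assumption:Levy-measure-r}.

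The moment bound is a routine a priori estimate. Splitting the three integrals in \eqref{eq:SDE-Pi}, I would apply the Burkholder--Davis--Gundy inequality to the $\od B$- and $\tilde N$-martingale parts and Jensen/Cauchy--Schwarz to the drift, combined with the linear-growth consequence of \cref{ass:Lip_grid} (bounding $|\tilde b(s,x)|^2 + \|\tilde a(s,x)\|_\rmF^2 + \int|\tilde\gamma(s,x,z)|^2\nu_s(\od z)$ by a constant times the $x=0$ value plus $K_{\textnormal{Lip}}|x|^2$, the $s$-integrated expectation of the former being controlled by $\|G_0\|_{\bfL^1}$), after a standard localization making the expectations finite. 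This yields
\[
\bE\Big[\sup_{r\le t}|X^{\Pi,\bfh}_r|^2\Big] \le C\Big(1 + |x|^2 + \int_0^T G_0(s)\,\od s\Big) + C\int_0^t \bE\Big[\sup_{r\le s}|X^{\Pi,\bfh}_r|^2\Big]\od s,
\]
and Gronwall's lemma closes the estimate. Since $K_{\textnormal{Lip}}$ and $\|G_0\|_{\bfL^1}$ are intrinsic to the coefficients and do not depend on the grid, the resulting constant $K$ is independent of both $\Pi$ and $x$.

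The heart of the matter is the random-measure reformulation \eqref{eq:SDE-random_measure_Pi}, which I would establish by showing that each stochastic integral there coincides with the corresponding integral in \eqref{eq:SDE-Pi}; this uses crucially that $\xi^\Pi_{t_i} = \xi^\Pi_s$ for $s\in(t_{i-1},t_i]$. For the drift, unravelling $M_D^\Pi$ gives $\int_{(0,t]\times[0,1]^d} b_\bfh(s,X_{s-},u)\,M_D^\Pi(\od s,\od u) = \int_0^t b_\bfh(s,X_{s-},\xi^\Pi_s)\,\od s$ directly from the Dirac mass at $\xi^\Pi_{t_i}$. For the diffusion, I would first check that each $M^\Pi_{B^{(l)}}(t,A) = \int_0^t \1_A(\xi^\Pi_s)\,\od B^{(l)}_s$ is an orthogonal martingale measure with covariance $\langle M^\Pi_{B^{(l)}}(\cdot,A),M^\Pi_{B^{(l')}}(\cdot,A')\rangle_t = \delta_{ll'}\int_0^t \1_{A\cap A'}(\xi^\Pi_s)\,\od s = \delta_{ll'}M_D^\Pi((0,t]\times(A\cap A'))$, which identifies the intensity as $M_D^\Pi$; the integral identity $\int g\,\od M^\Pi_{B^{(l)}} = \int_0^t g(s,\xi^\Pi_s)\,\od B^{(l)}_s$ is then verified on simple integrands $g(s,u) = \1_{(r_1,r_2]}(s)\1_A(u)$ and extended by the Walsh isometry and a monotone-class argument. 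For the jumps, $M_J^\Pi$ is the integer-valued random measure recording $(t,\Delta L_t,\xi^\Pi_t)$ at each atom of $N$; since $\xi^\Pi$ is predictable and $N$ has compensator $\nu_t(\od z)\,\od t$, testing against predictable integrands $W(t,z,u) = V(t,z)\1_A(u)$ shows that $\mu_J^\Pi$ is its predictable compensator, after which $\int\gamma_\bfh\,\tilde M_J^\Pi = \int\gamma_\bfh(s,X_{s-},\xi^\Pi_s,z)\,\tilde N(\od s,\od z)$ follows by splitting $\tilde M_J^\Pi = M_J^\Pi - \mu_J^\Pi$.

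The main obstacle I anticipate is the rigorous integration theory against $M^\Pi_{B^{(l)}}$ and $M^\Pi_J$: verifying the orthogonal-martingale-measure structure and the compensator claim, and then pushing the integral identities from simple integrands to the genuine integrands $a_\bfh^{(\cdot,l)}(s,X_{s-},\cdot)$ and $\gamma_\bfh(s,X_{s-},\cdot,\cdot)$ by density. This density step is precisely where the $\bfL^2$-bounds of \cref{ass:Lip_grid} together with the moment estimate of the second part are needed, to guarantee that the integrands lie in the $\bfL^2$-spaces associated with the intensity measure $M_D^\Pi$ and the compensator $\mu_J^\Pi$, respectively.
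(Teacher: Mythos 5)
Your proposal is correct and follows essentially the same route as the paper: strong wellposedness and the moment bound are obtained by Picard iteration and Gronwall's lemma for the SDE with the randomization frozen into predictable coefficients (the paper invokes a variant of \cite[Theorem 1.19]{OS19} and its appendix proposition), followed by the verification, via \cref{ass:Lip_grid}, Fubini and the moment estimate, that the integrands are square-integrable so that the classical It\^o integrals can be rewritten as integrals against $M_D^\Pi$, $M_{B^{(l)}}^\Pi$ and $\tilde M_J^\Pi$. The only difference is that the paper outsources the integral identifications (the orthogonal-martingale-measure structure of $M_{B^{(l)}}^\Pi$, the compensator of $M_J^\Pi$, and the simple-integrand-plus-density argument) to Lemmas 2.4--2.6 of \cite{BN24}, whereas you sketch direct proofs of exactly those facts.
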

 For integration with respect to compensated integer-valued random measures and to orthogonal martingale measures, we refer to \cite{JS03} and \cite{KM90}, respectively.
 \begin{proof}[Sketch of the proof.]
 The part on strong	existence, strong uniqueness  and the square integrability of the solution to \eqref{eq:SDE-Pi} is a variant of \cite[Theorem 1.19]{OS19}. It can be shown by standard arguments based on a Picard iteration and Gronwall's lemma, compare also the proof of \cref{prop:well-poseness-SDE-mm}. \cref{ass:Lip_grid} and the square integrability of  $X^{\Pi,\bfh}$ now imply that
 	\begin{eqnarray*}
 	\bE\bb[ \int_0^T  \|a_\bfh(s, X^{\Pi,\bfh}_{s-}, \xi^\Pi_s)\|^2_{\rmF} \od s\bb]\leq 2 TK_{\textnormal{Lip}} \, \bE\bb[ \sup_{0 \le t \le T} |X^{\Pi,\bfh}_t|^2\bb]+ 2\bE\bb[ \int_0^T  \|a_\bfh(s, 0, \xi^\Pi_s)\|^2_{\rmF} \od s\bb]<\infty.
 	\end{eqnarray*}
 Indeed, by Fubini's theorem the last term on the right-hand side becomes 
 $$
 2\int_0^T \nq \int_{[0, 1]^d}   \|a_\bfh(s, 0, u)\|_{\rmF}^2 \od u\leq 2\int_0^T G_0(s)\od s<\infty.
 $$
 In the same way, one can verify that
 $$
 	\bE\bb[ \int_0^T \nq \int_{\bR^q_0}  |\gamma_\bfh(s, X^{\Pi,\bfh}_{s-}, \xi^\Pi_s,z)|^2 \nu_s(\od z)\od s\bb]<\infty.
 $$
 Therefore, the integrals in the grid-sampling SDE can be manipulated by Lemmas 2.4--2.6 in \cite{BN24}, which shows that $X^{\Pi,\bfh}$ solves \eqref{eq:SDE-random_measure_Pi}.
 \end{proof}
 
\section{The grid-sampling limit SDE}\label{sec:limit}

The main theorem in \cite{BN24} proves vague convergence of the random measures $(M_D^\Pi, M_B^\Pi, M_J^\Pi)$ as the mesh-size of the partition $\Pi$ converges to zero. The random measure representation \eqref{eq:SDE-random_measure_Pi} of the grid-sampling SDE, then, motivates their definition of the grid-sampling limit SDE. For the convenience of the reader, we first recall the  limit theorem of \cite{BN24}.
\begin{theo}[Theorem 2.7 in \cite{BN24}]
		Let $(\Pi_n)_{n\in \bN}$ be a sequence of finite partitions of $[0,T]$ with $\lim_{n\rightarrow \infty} |\Pi_n|=0$.  For any $m\in \bN$, $R\in(0,\infty]$, and for any bounded measurable functions $f_{l}^{(k)}\colon [0,T] \times [0,1]^d\to \bR$ ($l=0,\ldots,p$; $k=1,\ldots,m$),   $f_{l}^{(k)} \colon [0,T]  \times \bR^q_0 \times [0,1]^d\to \bR$ ($l=p+1,p+2$; $k=1,\ldots,m$),
	consider the sequence of $\bR^m$-valued processes $\cX^n=(\cX^{n, (1)},\ldots, \cX^{n,(m)})$  defined via
	\begin{align*}
		\cX^{n,(k)}_t &= \int_{(0,t]\times[0,1]^d} f_{0}^{(k)}(s,u) M_D^{\Pi_n}(\od s,\od u)+ \sum_{l=1}^p \int_{(0,t]\times[0,1]^d} f_{l}^{(k)}(s,u) M_{B^{(l)}}^{\Pi_n}(\od s,\od u)  \\ 
		& \quad +  	\int_{(0,t]\times \{ 0<|z|\leq R\} \times[0,1]^d} f_{p+1}^{(k)}(s,z,u)|z| \tilde M_{J}^{\Pi_n}(\od s,\od z,\od u) \\ 
		& \quad +  	\int_{(0,t]\times \{ |z|> R\} \times[0,1]^d} f_{p+2}^{(k)}(s,z,u) M_{J}^{\Pi_n}(\od s,\od z,\od u), \quad t\in [0,T] ,\; k=1,\ldots, m.
	\end{align*}
	Then, $(\cX^n)_{n\in \bN}$ converges weakly to $\cX=(\cX^{(1)},\ldots, \cX^{(m)} )$ in the Skorokhod topology on the space of $\bR^m$-valued, c\`adl\`ag functions, where
	\begin{align*}
		\cX^{(k)}_t &= \int_{(0,t]\times[0,1]^d} f_{0}^{(k)}(s,u)\, \od s\,\od u+ \sum_{l=1}^p \int_{(0,t]\times[0,1]^d}f_{l}^{(k)}(s,u) M_{B^{(l)}}(\od s,\od u)  \\ 
		& \quad +  	\int_{(0,t]\times \{ 0<|z|\leq R\} \times[0,1]^d} f_{p+1}^{(k)}(s,z,u)|z| \tilde M_{J}(\od s,\od z,\od u) \\ 
		& \quad +  	\int_{(0,t]\times \{ |z|> R\} \times[0,1]^d} f_{p+2}^{(k)}(s,z,u) M_{J}(\od s,\od z,\od u), \quad t\in [0,T] ,\; k=1,\ldots, m.
	\end{align*}
Here, $M_{B^{(1)}},\ldots, M_{B^{(p)}}$ are independent white noise martingale  measures on $[0, T] \times \cB([0,1]^d)$ with intensity measure $\Leb_{[0,T]}\otimes \Leb^{\otimes d}_{[0,1]}$ and $M_J$ is an (inhomogeneous) Poisson random measure on $[0,T] \times \bR^q_0\times [0,1]^d$ (independent of the white noise measures) with predictable compensator 
$
\mu_J(\od t, \od z,\od u) := \nu_t(\od z)\od u \od t
$ and compensated measure $\tilde M_J : = M_J - \mu_J$.
\end{theo}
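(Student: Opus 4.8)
The plan is to recognise each $\cX^n$ as a semimartingale and to obtain $\cX^n\Rightarrow\cX$ in the Skorokhod topology by verifying convergence of the characteristic triplets, via the general limit theorem for semimartingales to a process with independent increments (PII) in \cite{JS03} (Chapters VII--IX). The decisive structural observation is that, since the marks $\xi_i$ of $\Pi_n$, the increments of $B$, and the restrictions of $N$ to disjoint time sets are mutually independent, the increments of $\cX^n$ over the grid intervals $(t_{i-1},t_i]$ of $\Pi_n$ are independent; hence $\cX^n$ is a triangular array with row-wise independent increments, and the limit $\cX$ is a PII. The workhorse throughout is the elementary law of large numbers stating that, for every bounded measurable $\phi\colon[0,T]\times[0,1]^d\to\bR$,
\[
\int_0^t \sum_{i}\1_{(t_{i-1},t_i]}(s)\,\phi(s,\xi_i)\,\od s \longrightarrow \int_0^t\!\int_{[0,1]^d}\phi(s,u)\,\od u\,\od s
\]
in $L^2(\bP)$, uniformly in $t\in[0,T]$; this holds because the summands are independent with matched conditional means and total variance $O(|\Pi_n|)\to 0$.

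Next I would compute the characteristics of $\cX^n$ relative to a fixed truncation function and show that each converges in probability to the (deterministic) characteristic of $\cX$. The continuous second characteristic stems only from the Brownian integrals, $C^{n,(k,k')}_t=\sum_{l=1}^{p}\int_0^t\sum_i\1_{(t_{i-1},t_i]}(s)f_l^{(k)}(s,\xi_i)f_l^{(k')}(s,\xi_i)\,\od s$, which by the lemma (with $\phi=f_l^{(k)}f_l^{(k')}$) converges to $\sum_l\int_0^t\int_{[0,1]^d}f_l^{(k)}f_l^{(k')}\,\od u\,\od s$, the bracket of the limiting white-noise integrals. The third characteristic is the image of $\mu_J^{\Pi_n}$ under the jump map $\Gamma(s,z,u)\in\bR^m$ with components $\1_{\{|z|\lee R\}}f_{p+1}^{(k)}(s,z,u)|z|+\1_{\{|z|>R\}}f_{p+2}^{(k)}(s,z,u)$; for bounded continuous $g$ vanishing near the origin, $g*\nu^n_t=\int_0^t\sum_i\1_{(t_{i-1},t_i]}(s)\int_{\bR^q_0}g(\Gamma(s,z,\xi_i))\nu_s(\od z)\,\od s$ converges, again by the lemma, to $g*\nu_t$, where $\nu$ is the image of $\nu_t(\od z)\,\od u\,\od t$ under $\Gamma$; this is exactly the compensator $\mu_J(\od t,\od z,\od u)=\nu_t(\od z)\,\od u\,\od t$ of the limiting Poisson measure. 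The first characteristic is handled identically, the $M_D^{\Pi_n}$-term converging to $\int_0^t\int f_0^{(k)}\,\od u\,\od s$. Finally, all cross-characteristics between the continuous and the jump part vanish, since $B$ and $N$ are independent and one part is continuous while the other is purely discontinuous.

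The \emph{main obstacle} is the jump part in the infinite-activity regime, where several small jumps of $L$ landing in the same interval $(t_{i-1},t_i]$ receive the identical mark $\xi_i$, whereas the limiting Poisson measure $M_J$ carries independent uniform marks. The point is that this clustering does not affect the predictable compensator $\nu^n$, whose convergence was established purely through the $u$-averaging of the lemma and is insensitive to mark-sharing; the dependence it creates is confined to intervals of length at most $|\Pi_n|$ and is therefore asymptotically negligible. Concretely, I would verify the Lindeberg-type conditions of the convergence theorem, i.e.\ tightness of $(|x|^2\wedge1)*\nu^n$ and control of the large jumps, using the square-integrability \eqref{assumption:Levy-measure-r} of the Lévy kernel together with the boundedness of the $f_l^{(k)}$ to bound the second moments of the compensated small-jump integral uniformly in $n$. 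These bounds, combined with the convergence of $C^n$ and of $(|x|^2\wedge1)*\nu^n$, also yield tightness of $(\cX^n)$ in the Skorokhod topology.

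Assembling these ingredients, the semimartingale convergence theorem of \cite{JS03} gives $\cX^n\Rightarrow\cX$, with $\cX$ the announced PII. Alternatively, one may bypass the abstract machinery and prove convergence of the finite-dimensional distributions directly: by row-wise independence the characteristic function $\bE[\exp(i\langle\theta,\cX^n_t\rangle)]$ factorises over the grid intervals, and conditioning on each $\xi_i$ produces a Gaussian-times-Lévy factor of size $1+O(|t_i-t_{i-1}|)$; expanding the logarithm and averaging over the uniform $\xi_i$ turns the sum over intervals into Riemann sums that converge, by the lemma, to the log-characteristic function of $\cX$. Together with the tightness above, this again yields the claimed weak convergence.
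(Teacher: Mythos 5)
You should be aware, first, that the present paper contains no proof of this statement: it is recalled verbatim, for the reader's convenience, as Theorem 2.7 of \cite{BN24}, so your attempt can only be judged on its own merits, not against a proof printed here. On those merits, your strategy is the natural one and very plausibly the intended one: view $\cX^n$ as an $\bR^m$-valued semimartingale with respect to $\bF^{\Pi_n}$, compute its predictable characteristics, and invoke the criteria of \cite{JS03} (Chapter VIII) for convergence of semimartingales to a process with independent increments, for which convergence in probability of the drift, the modified second characteristic, and $g*\nu^n$ (for bounded continuous $g$ vanishing near $0$) to the continuous deterministic characteristics of the limit suffices. Your two structural observations are exactly the crux: the $L^2$ law of large numbers for $t\mapsto\int_0^t\sum_i\1_{(t_{i-1},t_i]}(s)\phi(s,\xi_i)\,\od s$ is what produces the $\od u$-averaging in all three characteristics, and the sharing of one mark $\xi_i$ by all jumps in $(t_{i-1},t_i]$ is invisible at the level of the compensator, which is the only jump information the limit theorem needs. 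One simplification: since the limit is a PII with continuous deterministic characteristics and no fixed times of discontinuity, functional Skorokhod convergence is part of the conclusion of that theorem; your separate tightness/Lindeberg paragraph is redundant rather than wrong (it would be needed only for your alternative route via characteristic functions and finite-dimensional distributions).

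Two points do need repair before this is a complete proof. First, your law of large numbers is stated for bounded $\phi$, but for the jump characteristic you must apply it to $\phi(s,u)=\int_{\bR^q_0}g(\Gamma(s,z,u))\,\nu_s(\od z)$, where $\Gamma$ is your jump map. This $\phi$ satisfies only $|\phi(s,u)|\le\|g\|_\infty\,\nu_s(\{|z|>\delta\})=:\psi(s)$ for some $\delta>0$ determined by $g$, $R$ and $\|f_{p+1}\|_\infty$; the majorant $\psi$ is finite for each $s$ and integrable on $[0,T]$ by \eqref{assumption:Levy-measure-r}, but it need not be bounded in $s$. You therefore need the (easy) extension of the lemma to integrands with an integrable majorant: the variance bound becomes $4\sum_i\big(\int_{(t_{i-1},t_i]}\psi(s)\,\od s\big)^2\le 4\max_i\big(\int_{(t_{i-1},t_i]}\psi(s)\,\od s\big)\int_0^T\psi(s)\,\od s$, which still vanishes as $|\Pi_n|\to 0$ by absolute continuity of the integral, and the same remark applies to the uniform-in-$t$ statement. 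Second, the first characteristic of $\cX^n$ is not just the $M_D^{\Pi_n}$-term: relative to a truncation function $\chi$ it also contains, componentwise, the correction $\int_0^t\int_{\bR^q_0}\big[\chi(\Gamma(s,z,\xi^{\Pi_n}_s))-\1_{\{0<|z|\le R\}}f_{p+1}(s,z,\xi^{\Pi_n}_s)|z|\big]\nu_s(\od z)\,\od s$, reflecting the mismatch between your compensation convention (compensate $|z|\le R$, leave $|z|>R$ uncompensated) and $\chi$; the analogous term appears in the limit characteristics, and its convergence, uniformly in $t$ as the drift condition requires, follows again from the extended lemma. With these two repairs, and with the modified second characteristic (not $C^n$ alone) as the object of the second condition, your argument goes through.
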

For more information on white noise measures, we refer to \cite{KM90}.

Replacing the random measures $(M_D^\Pi, M_B^\Pi, M_J^\Pi)$ by their limit measures in \eqref{eq:SDE-random_measure_Pi}, one arrives at the \emph{grid-sampling limit SDE} for a randomized policy $\bfh$, namely,
\begin{align}\label{eq:SDE-random_measure_limit}
	X^{\bfh}_t = x &+ \int_{0}^t \nq \int_{[0,1]^d}  b_\bfh(s, X^{\bfh}_{s-}, u) \od u \od s+ \sum_{l=1}^p \int_{(0,t]\times[0,1]^d}  a_\bfh^{(\cdot,l)}(s, X^{\bfh}_{s-}, u) M_{B^{(l)}}(\od s,\od u)  \notag\\ 
	&+  	\int_{(0,t]\times {\bR^q_0} \times[0,1]^d}  \gamI_{\bfh}(s, X^{\bfh}_{s-},  u, z)\tilde M_{J}(\od s,\od z,\od u).
\end{align}

The following theorem provides the wellposedness of the grid-sampling limit SDE under slightly weaker assumptions than the ones imposed in \cref{ass:Lip_grid}. 
\begin{theo}\label{thm:well-posedness-SDE}
	Suppose that $b_{\bfh}$, $a_{\bfh}$, and $\gamma_{\bfh}$ are Borel measurable and 
	 that there exist constants $K_{b, \bfh}, K_{a, \bfh}, K_{\gamma, \bfh} \ge 0$ independent of $s$ such that the following conditions hold for any $s \in [0, T]$ and $x_1, x_2 \in \bR^m$: The Lipschitz property for the drift part
	\begin{align*}
		\int_{[0, 1]^d} |b_\bfh(s, x_1, u) - b_\bfh(s, x_2, u)| \od u \le K_{b, \bfh} |x_1 - x_2|,
	\end{align*}
	and for the martingale parts
	\begin{align*}
		& \int_{[0, 1]^d} \|a_\bfh(s, x_1, u) - a_\bfh(s, x_2,u)\|_{\rmF}^2 \, \od u \le K^2_{a, \bfh}|x_1 - x_2|^2, \\
		& \int_{\bR^q_0 \times [0, 1]^d} |\gamma_\bfh(s, x_1, u, z) - \gamma_\bfh(s, x_2, u, z)|^2 \, \nu_s(\od z) \od u \le K^2_{\gamma, \bfh}|x_1 - x_2|^2,
	\end{align*}
	and the function $G_0$ introduced in \cref{ass:Lip_grid}
	takes finite values for all $s \in [0, T]$ and $G_0 \in \bfL^1([0, T], \Leb_{[0, T]})$. 	Then, the grid-sampling limit SDE \eqref{eq:SDE-random_measure_limit} for policy $\bfh$ has a unique (up to an indistinguishability) strong solution $X^{\bfh}$ with
	\begin{align*}
		\bE\bb[ \sup_{0 \le t \le T} |X^{\bfh}_t|^2\bb] \le K(1 + |x|^2)
	\end{align*}
	for some constant $K\ge 0$ independent of the initial condition $x\in \bR^m$. Moreover, the law of $X^\bfh$ solves the martingale problem for the operator $\cL_\bfh$ defined for $f \in C^2_c(\bR^m)$ by
	\begin{align*}
		(\cL_{\bfh} f)(s, x) & := \int_{[0, 1]^d} \bb[\sum_{i=1}^m b^{(i)}_{\bfh}(s, x, u) \frac{\pd f}{\pd x_i}(x) +  \frac{1}{2}\sum_{i, j = 1}^m A_\bfh^{(i, j)}(s, x, u) \frac{\pd^2 f}{\pd x_i \pd x_j}(x) \\
		& \hspace{1.5cm} + \int_{\bR^q_0} \bb(f(x + \gamma_\bfh(s, x, u, z)) - f(x) - \sum_{i=1}^m  \gamma_\bfh^{(i)}(s, x, u, z) \frac{\pd f}{\pd x_i}(x) \bb) \nu_s(\od z) \bb] \od u,
	\end{align*}
	with initial distribution $\delta_x$. Here, $C^2_c(\bR^m)$ stands for the space of twice-continuously differentiable functions with compact support in $\bR^m$, and $A_\bfh : = a_\bfh a_\bfh^\tran$.
\end{theo}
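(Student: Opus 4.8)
The plan is to obtain strong existence and uniqueness together with the second-moment bound by a Picard--Gronwall scheme adapted to the three driving objects in \eqref{eq:SDE-random_measure_limit} (Lebesgue measure on $[0,T]\times[0,1]^d$, the white noise measures $M_{B^{(l)}}$, and the compensated jump measure $\tilde M_J$), and then to identify $\cL_\bfh$ as the generator by applying It\^o's formula to $f(X^\bfh)$. Concretely, I would set $X^{(0)}_t\equiv x$ and define iteratively
\begin{align*}
X^{(n+1)}_t &= x + \int_0^t\!\int_{[0,1]^d} b_\bfh(s,X^{(n)}_{s-},u)\,\od u\,\od s + \sum_{l=1}^p\int_{(0,t]\times[0,1]^d} a_\bfh^{(\cdot,l)}(s,X^{(n)}_{s-},u)\, M_{B^{(l)}}(\od s,\od u) \\
&\quad + \int_{(0,t]\times\bR^q_0\times[0,1]^d}\gamma_\bfh(s,X^{(n)}_{s-},u,z)\,\tilde M_J(\od s,\od z,\od u),
\end{align*}
which is a square-integrable c\`adl\`ag adapted process once one checks the linear-growth bounds $\int_{[0,1]^d}|b_\bfh(s,y,u)|\od u\le G_0(s)^{1/2}+K_{b,\bfh}|y|$ and the analogous $L^2$-bounds for $a_\bfh$ and $\gamma_\bfh$, all of which follow from the three Lipschitz hypotheses together with $G_0\in\bfL^1([0,T],\Leb_{[0,T]})$.

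The contraction estimate rests on three tools: Jensen's inequality for the drift; the $L^2$-isometry for white noise integrals (\cite{KM90}), $\bE[(\int_{(0,t]\times[0,1]^d} g\,M_{B^{(l)}})^2]=\bE[\int_{(0,t]\times[0,1]^d} g^2\,\od u\,\od s]$; and the isometry for $\tilde M_J$ (\cite{JS03}) with reference measure $\nu_s(\od z)\od u\,\od s$, combined in each case with Doob's maximal inequality to move the supremum inside the expectation. Writing $\phi_n(t):=\bE[\sup_{r\le t}|X^{(n+1)}_r-X^{(n)}_r|^2]$ and applying the three Lipschitz conditions, I obtain a recursion $\phi_n(t)\le C\int_0^t\phi_{n-1}(s)\,\od s$ with $C$ depending only on $T,K_{b,\bfh},K_{a,\bfh},K_{\gamma,\bfh}$; iterating gives $\phi_n(T)\le (CT)^n/n!\cdot\phi_0(T)$, so $(X^{(n)})$ is Cauchy in the norm $(\bE[\sup_{t\le T}|\cdot|^2])^{1/2}$ and its limit $X^\bfh$ solves the SDE. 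Uniqueness up to indistinguishability follows from the same estimate applied to the difference of two solutions plus Gronwall. The moment bound is obtained identically: linear growth yields $\bE[\sup_{r\le t}|X^\bfh_r|^2]\le K_1(1+|x|^2)+K_2\int_0^t\bE[\sup_{r\le s}|X^\bfh_r|^2]\,\od s$, and Gronwall closes it.

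For the martingale problem I would apply It\^o's formula for semimartingales with jumps to $f(X^\bfh_t)$, $f\in C^2_c(\bR^m)$. The continuous-martingale part of $X^{\bfh,(i)}$ is $\sum_l\int a_\bfh^{(i,l)}M_{B^{(l)}}$; since the $M_{B^{(l)}}$ are independent white noise measures with intensity $\Leb_{[0,T]}\otimes\Leb^{\otimes d}_{[0,1]}$, the cross-terms $l\neq l'$ drop and the quadratic covariation of the $i$-th and $j$-th continuous parts equals $\int_{[0,1]^d}\sum_l a_\bfh^{(i,l)}a_\bfh^{(j,l)}\,\od u\,\od s=\int_{[0,1]^d}A_\bfh^{(i,j)}(s,X^\bfh_{s-},u)\,\od u\,\od s$, producing exactly the second-order term in $\cL_\bfh$. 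The jumps of $X^\bfh$ equal $\gamma_\bfh(s,X^\bfh_{s-},u,z)$ at the atoms of $M_J$; writing $M_J=\tilde M_J+\mu_J$ with $\mu_J=\nu_s(\od z)\od u\,\od s$, the compensator part of the It\^o jump sum $f(X^\bfh_{s-}+\gamma_\bfh)-f(X^\bfh_{s-})-\nabla f(X^\bfh_{s-})\cdot\gamma_\bfh$ yields the integro-differential term, and the drift integral against $\od u\,\od s$ yields the first-order term. Collecting the finite-variation pieces gives $\int_0^t(\cL_\bfh f)(s,X^\bfh_s)\,\od s$, while the remaining integrals against $M_{B^{(l)}}$ and $\tilde M_J$ are martingales; hence $f(X^\bfh_t)-f(x)-\int_0^t(\cL_\bfh f)(s,X^\bfh_s)\,\od s$ is a martingale and the law of $X^\bfh$ solves the martingale problem for $\cL_\bfh$ with initial law $\delta_x$.

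The main obstacle I expect is the genuine (not merely local) martingale property of the stochastic-integral terms and the integrability needed to split $M_J=\tilde M_J+\mu_J$ inside It\^o's formula. Here the a priori moment bound is decisive: since $f\in C^2_c$, one has $|f(y+h)-f(y)|^2\le\|\nabla f\|_\infty^2|h|^2$ and $|f(y+h)-f(y)-\nabla f(y)\cdot h|\le\tfrac12\|\Hess f\|_\infty|h|^2$, so all martingale integrands are controlled in $L^2(\mu_J)$, respectively $L^2(\od u\,\od s)$, by $\int_{\bR^q_0\times[0,1]^d}|\gamma_\bfh(s,X^\bfh_{s-},u,z)|^2\nu_s(\od z)\od u$ and $\int_{[0,1]^d}\|a_\bfh(s,X^\bfh_{s-},u)\|_\rmF^2\od u$, whose time-integrals have finite expectation by linear growth and the moment bound. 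The only care beyond the classical It\^o setting is the bookkeeping of the white noise covariation producing $A_\bfh=a_\bfh a_\bfh^\tran$ integrated over the sampling variable $u$, and preserving predictability of the integrands $a_\bfh(\cdot,X^\bfh_{\cdot-},\cdot)$ and $\gamma_\bfh(\cdot,X^\bfh_{\cdot-},\cdot,\cdot)$ throughout.
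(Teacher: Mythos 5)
Your proposal is correct, and its analytic core coincides with the paper's: a Picard--Gronwall fixed-point scheme (Doob's maximal inequality plus the $L^2$-isometries for the white-noise and compensated-Poisson integrals) for existence, uniqueness and the second-moment bound, followed by It\^o's formula for the martingale problem, where --- exactly as in the paper --- the independence of the $M_{B^{(l)}}$ kills the cross-covariations and produces the term $\int_{[0,1]^d}A_\bfh^{(i,j)}(s,X^\bfh_{s-},u)\,\od u\,\od s$, and the compact support of $f$ upgrades the local martingale terms to true martingales. The one genuine difference is organizational: you run the iteration directly on \eqref{eq:SDE-random_measure_limit}, treating the three drivers separately with their respective isometries, whereas the paper factors the entire fixed-point argument through the abstract wellposedness result \cref{prop:well-poseness-SDE-mm} for SDEs driven by finitely many orthogonal martingale measures on $E=\bR^q\times[0,1]^d$. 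To fit that mold, the paper embeds the Gaussian parts as $\delta_0(\od z)\,M_{B^{(j)}}(\od s,\od u)$ and the jump part as $\1_{\{z\neq 0\}}\,|z|\,\tilde M_J(\od s,\od z,\od u)$ with the rescaled coefficient $|z|^{-1}\gamma_\bfh$, so that the single pair of conditions \eqref{app:Lipschitz-SDE}--\eqref{app:growth-condition} covers all martingale terms at once. The paper's route buys a reusable standalone proposition and one unified estimate; yours is more self-contained and avoids the $|z|$-rescaling device, at the cost of carrying two isometries through the same standard computation. One small point of care in your version: since the drift Lipschitz hypothesis is only $\bfL^1$ in $u$, your contraction step for the drift must go through the integrated coefficient $\int_{[0,1]^d}b_\bfh(s,\cdot,u)\,\od u$ (Cauchy--Schwarz in time, as you indicate via Jensen), which is precisely what the paper's definition of $\beta$ accomplishes; with that reading, both arguments are complete.
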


 \begin{proof}[Proof of  \cref{thm:well-posedness-SDE}]
  We aim to apply \cref{prop:well-poseness-SDE-mm}. To do that, we employ the notations there and choose $E = \bR^q \times [0, 1]^d$ which is equipped with the Euclidean norm. Let $\ell = p +1$. Define the orthogonal martingale measures on $[0, T] \times \cB(\bR^q \times [0, 1]^d)$ by
 		\begin{align*}
 			M^{(j)}(\od s, \od z, \od u) := \begin{cases}
 				\delta_0(\od z) M_{B^{(j)}}(\od s, \od u) & \trm{if } j = 1, \ldots, p,\\
 				\1_{\{z \neq 0\}} |z| \tilde M_J(\od s, \od z, \od u) & \trm{if } j = p+1,
 			\end{cases}
 		\end{align*}
 		with the intensity
 		\begin{align*}
 			\mu^{(j)}(\od s, \od z, \od u) := \begin{cases}
 				\delta_0(\od z)  \od u \od s & \trm{if } j = 1, \ldots, p,\\
 				\1_{\{z \neq 0\}} |z|^2 \nu_s(\od z) \od u \od s & \trm{if } j = p+1.
 			\end{cases}
 		\end{align*}
 		Now, the $\bR^m$-valued function $\beta$ and $\bR^{m \times (p+1)}$-valued function $\alpha$ are defined by 
 		\begin{align*}
 			\beta(s,x) &:= \int_{[0, 1]^d} b_\bfh(s, x, u) \od u,\\
 			\alpha^{(i, j)}(s, x, z, u) &:=  \begin{cases}
 				\1_{\{z = 0\}} a_\bfh^{(i, j)}(s, x, u) & \trm{if } 1 \le i \le m, 1 \le j \le p,\\
 				\1_{\{z \neq 0\}} |z|^{-1} \gamma_\bfh^{(i)}(s, x, u, z) & \trm{if } 1 \le i \le m, j = p+1.
 			\end{cases}
 		\end{align*}
 		Then, the SDE \eqref{eq:SDE-random_measure_limit} becomes exactly  \eqref{app:SDE-modified} and all conditions in \cref{prop:well-poseness-SDE-mm} are fulfilled, and hence, the unique existence of the strong solution with the moment estimate follows.
 		
 		\smallskip
 		
 		Regarding the martingale problem, we first notice that $\cL_\bfh f$ is finitely defined on $[0, T] \times \bR^m$ due to the conditions imposed on the coefficients and the boundedness of the partial derivatives of $f$.
 		Note that the continuous martingale part of $X^{\bfh}$ has predictable quadratic variation equal to 
 		$$\int_0^\cdot  \sum_{i, j =1}^m \bb(\int_{[0, 1]^d} A^{(i, j)}_{\bfh}(s, X^\bfh_{s-}, u) \od u \bb) \od s $$ 
 		by combining Proposition I-6(2) in \cite{KM90} with taking the independence of the white noise measures into account.
 			We may, thus, apply It\^o's formula for $X^\bfh$ and $f \in C^2_c(\bR^m)$ to get, a.s., 
 		\begin{align*}
 			&f(X^\bfh_t) - f(x) \\
 			& = \int_0^t \sum_{i=1}^m \frac{\pd f}{\pd x_i}(X^\bfh_{s-}) \bb(\int_{[0, 1]^d} b^{(i)}_\bfh(s, X^{\bfh}_{s-}, u) \od u\bb) \od s\\
 			& \quad + \frac{1}{2} \int_0^t \sum_{i, j =1}^m \frac{\pd^2 f}{\pd x_i \pd x_j}(X^\bfh_{s-}) \bb(\int_{[0, 1]^d} A^{(i, j)}_{\bfh}(s, X^\bfh_{s-}, u) \od u \bb) \od s\\
 			& \quad + \int_0^t \nq \int_{\bR^q_0 \times [0, 1]^d} \bb(f(X^\bfh_{s-} + \gamma_\bfh(s, X^\bfh_{s-}, u, z)) - f(X^\bfh_{s-}) - \sum_{i=1}^m \frac{\pd f}{\pd x_i}(X^\bfh_{s-}) \gamma^{(i)}_\bfh(s, X^\bfh_{s-}, u, z)\bb) \nu_s(\od z) \od u \od s \\
 			& \quad + \trm{local martingale terms}\\
 			& = \int_0^t (\cL_\bfh f)(s, X^\bfh_{s-}) \od s + \trm{local martingale terms}.
 		\end{align*}
 		Since $f$ has compact support, the local martingale terms become a proper bounded martingale. Hence, the law of $X^\bfh$ solves the said martingale problem above.
 	\end{proof}

 	\begin{rema}
 		In \cite{BN24} the grid-sampling limit SDE is introduced in the more general form
 		\begin{align*}
 		X^{\bfh}_t = x &+ \int_{0}^t \nq \int_{[0,1]^d}  b_\bfh(s, X^{\bfh}_{s-}, u) \od u \od s+ \sum_{l=1}^p \int_{(0,t]\times[0,1]^d}  a_\bfh^{(\cdot,l)}(s, X^{\bfh}_{s-}, u) M_{B^{(l)}}(\od s,\od u)  \notag\\
 			&+  	\int_{(0,t]\times \{ 0<|z|\le \mathfrak{r}\} \times[0,1]^d} \gamI_{\bfh}(s, X^{\bfh}_{s-},  u, z)\tilde M_{J}(\od s,\od z,\od u) \notag \\ 
 			& +  	\int_{(0,t]\times \{ |z|> \mathfrak{r}\} \times[0,1]^d} \gamI_{\bfh}(s, X^{\bfh}_{s-},  u, z) M_{J}(\od s,\od z,\od u)
 		\end{align*}
 		for some $\mathfrak{r}\in [0,\infty]$. In this case the integrability condition \eqref{assumption:Levy-measure-r} can be replaced by 
 		$$
 		 \int_0^T \nq \int_{\bR^q_0} ( |z|^2 \1_{\{0 < |z| \le \mathfrak{r}\}}  + \1_{\{|z| > \mathfrak{r}\}}) \nu_t(\od z)  \od t < \infty,
 		$$ 
 		which is weaker than \eqref{assumption:Levy-measure-r}, if $\mathfrak{r}\in (0,\infty)$. In the presence of big jumps that may fail to be square integrable, wellposedness of the grid-sampling SDE can be established by adapting the interlacing technique for the finite activity jump part, see, e.g., the proof of \cite[Theorem IV.9.1]{IW89}. In this case, the solution cannot be expected to be square integrable anymore. Therefore, integration with respect to the orthogonal martingale measures must be localized, see, e.g., Chapter 13 in \cite{KD01}. 		
 	\end{rema}

\appendix

\section{SDEs driven by martingale measures}

Let $\{M^{(1)}, \ldots, M^{(\ell)}\}$ be a collection of  (c\`adl\`ag) $(\bF, \bP)$-martingale measures on $[0, T] \times \cB(E)$, where $E$ is a complete and separable metric space equipped with the Borel $\sigma$-field $\cB(E)$, see \cite{KM90, Wa86}.
Assume that each $M^{(j)}$ is an orthogonal martingale measure with \textit{deterministic} intensity measure $\mu^{(j)}$ which satisfies 
$$\mu^{(j)}(\od s, \od e) = \mu^{(j)}_s(\od e) \od s$$ for some transition kernel $(\mu^{(j)}_s)_{s \in [0, T]}$, $j = 1, \ldots, \ell$.
We denote $M = (M^{(1)}, \ldots, M^{(\ell)})^\tran$.

\begin{prop}\label{prop:well-poseness-SDE-mm} Let $\beta \colon [0, T] \times \bR^m \to \bR^m$ and $\alpha \colon [0, T] \times \bR^m \times E \to \bR^{m \times \ell}$ be  measurable. Assume that there exist  constants $K_\beta, K_\alpha \ge 0$ not depending on $s$ such that, for all $s\in [0, T]$, $y_1, y_2 \in \bR^m$,
	\begin{equation}\label{app:Lipschitz-SDE}
		\begin{aligned}
			&|\beta(s, y_1) - \beta(s, y_2)| \le K_\beta |y_1 - y_2|, \\
			&4 \ell \sum_{i=1}^m \sum_{j=1}^\ell \int_E |\alpha^{(i, j)}(s, y_1, e) - \alpha^{(i, j)}(s, y_2, e)|^2 \,  \mu^{(j)}_s(\od e) \le K_\alpha^2 |y_1 - y_2|^2,
		\end{aligned}
	\end{equation}
	and that
	\begin{align}\label{app:growth-condition}
		K^2_0: =	T \int_0^T |\beta(s, 0)|^2 \od s + 4\ell \sum_{i=1}^m \sum_{j=1}^\ell \int_0^T \nq \int_E |\alpha^{(i, j)}(s, 0, e)|^2\, \mu^{(j)}_s(\od e) \od s < \infty. 
	\end{align}
	Then, for any $\cF_0$-measurable $Y_0 \in \bfL^2(\bP)$, the following SDE 	\begin{align}\label{app:SDE-modified}
		Y_t = Y_0 + \int_0^t \beta(s, Y_{s-}) \od s + \int_{(0, t] \times E} \alpha(s, Y_{s-}, e) M(\od s, \od e), \quad t \in [0, T],
	\end{align}
	has a unique (up to an indistinguishability) strong solution $Y$ with
	\begin{align*}
		\bE\bb[\sup_{0 \le t \le T} |Y_t|^2\bb] \le K (1+ \bE[|Y_0|^2])
	\end{align*}
	for some constant $K \ge 0 $ depending only on $K_\beta, K_\alpha, K_0$ and $T$.
\end{prop}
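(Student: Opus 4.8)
The plan is to construct the solution by a Picard iteration carried out in the Banach space $\cS^2$ of $\bR^m$-valued, $\bF$-adapted, c\`adl\`ag processes $Y$ equipped with the time-truncated norms $\|Y\|_t^2 := \bE[\sup_{0\le s\le t}|Y_s|^2]$, the relevant complete norm being $\|\cdot\|_T$. The single analytic tool that drives every estimate is the $L^2$-theory of \cite{KM90} for integrals against an orthogonal martingale measure with \emph{deterministic} intensity: for each $j$ and every predictable integrand $\phi$ of finite energy,
\[
\bE\bb[\bb(\int_{(0,t]\times E}\phi(s,e)\,M^{(j)}(\od s,\od e)\bb)^2\bb] = \bE\bb[\int_{(0,t]\times E}|\phi(s,e)|^2\,\mu^{(j)}_s(\od e)\,\od s\bb].
\]
Combining the $\ell$ driving measures through the elementary inequality $|\sum_j x_j|^2 \le \ell\sum_j|x_j|^2$, summing over the $m$ coordinates, and applying Doob's $L^2$-maximal inequality to pass to the supremum, one arrives at
\[
\bE\bb[\sup_{0\le s\le t}\bb|\int_{(0,s]\times E}\alpha(r,Y_{r-},e)\,M(\od r,\od e)\bb|^2\bb] \le 4\ell\sum_{i=1}^m\sum_{j=1}^\ell\bE\bb[\int_{(0,t]\times E}|\alpha^{(i,j)}(r,Y_{r-},e)|^2\,\mu^{(j)}_r(\od e)\,\od r\bb].
\]
The constant $4\ell$ in \eqref{app:Lipschitz-SDE} and \eqref{app:growth-condition} is tailored precisely to this bound.

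Next I would verify that the Picard map $\Phi$, sending $Y$ to
\[
(\Phi Y)_t := Y_0 + \int_0^t\beta(s,Y_{s-})\,\od s + \int_{(0,t]\times E}\alpha(s,Y_{s-},e)\,M(\od s,\od e),
\]
maps $\cS^2$ into itself. The point is that for $Y\in\cS^2$ the integrand $\alpha(s,Y_{s-},e)$ is admissible: splitting $\alpha(s,Y_{s-},e) = [\alpha(s,Y_{s-},e)-\alpha(s,0,e)] + \alpha(s,0,e)$ and combining the Lipschitz bound \eqref{app:Lipschitz-SDE} with the finiteness at the origin in \eqref{app:growth-condition} shows that its energy is finite; the analogous splitting handles $\beta$. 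Predictability of the integrands follows from the left-continuity of $s\mapsto Y_{s-}$ and the measurability of the coefficients.

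The core of the argument is the contraction estimate. For $Y,Y'\in\cS^2$ I would split $\Phi Y-\Phi Y'$ into its drift and martingale parts. Cauchy--Schwarz in the time variable together with the Lipschitz bound on $\beta$ controls the drift part by $TK_\beta^2\int_0^t\|Y-Y'\|_r^2\,\od r$, while the displayed energy estimate together with the Lipschitz condition \eqref{app:Lipschitz-SDE} controls the martingale part by $K_\alpha^2\int_0^t\|Y-Y'\|_r^2\,\od r$ (using $\bE|Y_{r-}-Y'_{r-}|^2\le\|Y-Y'\|_r^2$). Hence, with $C:=2(TK_\beta^2+K_\alpha^2)$,
\[
\|\Phi Y-\Phi Y'\|_t^2 \le C\int_0^t\|Y-Y'\|_r^2\,\od r, \qquad 0\le t\le T.
\]
Iterating this inequality along the Picard sequence $Y^{(n+1)}:=\Phi Y^{(n)}$, $Y^{(0)}\equiv Y_0$, yields the factorial decay $\|Y^{(n+1)}-Y^{(n)}\|_T^2\le \tfrac{(CT)^n}{n!}\|Y^{(1)}-Y^{(0)}\|_T^2$, so the sequence is Cauchy in $\cS^2$ and converges to some $Y\in\cS^2$. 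Since the same estimate (with $t=T$) makes $\Phi$ Lipschitz on $\cS^2$, one may pass to the limit to obtain $Y=\Phi Y$, that is, $Y$ solves \eqref{app:SDE-modified}; uniqueness follows by applying the contraction estimate to two solutions and invoking Gronwall's lemma. Finally, the linear-growth moment bound comes from the parallel a priori estimate $\|Y\|_t^2\le K'(1+\bE[|Y_0|^2])+C\int_0^t\|Y\|_r^2\,\od r$---obtained from \eqref{app:growth-condition} and the linear growth of $\beta,\alpha$ in $x$ implied by the Lipschitz hypotheses---followed once more by Gronwall's lemma, with the resulting constant $K$ depending only on $K_\beta,K_\alpha,K_0,T$.

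The one genuinely delicate point, and the place where care is needed, is the martingale estimate: one must keep in mind that ``orthogonal'' refers to each individual measure $M^{(j)}$ (so that its covariance measure charges only the diagonal and the above isometry applies), and not to a mutual orthogonality of distinct $M^{(j)}$; this is exactly why the $\ell$ driving measures are combined by Cauchy--Schwarz rather than by an isometry, producing the factor $4\ell$. Beyond ensuring predictability and finite energy of the integrands along the iteration, the remainder is the standard fixed-point and Gronwall machinery.
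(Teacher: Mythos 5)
Your proposal is correct and follows essentially the same route as the paper's proof: a Picard iteration with the identical Doob-plus-isometry estimate (producing the same constants $4\ell$ and $2(TK_\beta^2+K_\alpha^2)$), factorial decay of the iterates, passage to the limit via Lipschitz continuity of the Picard map, and Gronwall's lemma for uniqueness. The only cosmetic differences are that you conclude convergence of the iterates from completeness of the space $\cS^2$ where the paper runs the Markov/Borel--Cantelli argument explicitly to build the limit pathwise, and you obtain the moment bound by an a priori Gronwall estimate where the paper sums the geometric-factorial series.
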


Related results are stated, e.g., in \cite[Proposition IV-1]{KM90} for the no-jump case and in \cite[Remark~4]{BCP20}.

\begin{proof}
	
	\noindent \textbf{\textit{Existence.}} 
	We use the Picard iteration argument. Let $Y^0 = (Y^0_{t})_{t \in [0, T]}$ with $Y^0_t: = Y_0$ for all $t \in [0, T]$, and inductively define the sequence of process $(Y^n)_{n \in \bN}$ via
	\begin{align*}
		Y^n_t : = Y_0 + \int_0^t  \beta(s, Y^{n-1}_{s-}) \od s + \int_{(0, t] \times E} \alpha(s, Y^{n-1}_{s-}, e) M(\od s, \od e), \quad t \in [0, T].
	\end{align*}
	We also define
	\begin{align*}
		\Theta_t : = \int_0^t  \beta(s, 0) \od s + \int_{(0, t] \times E} \alpha(s, 0, e) M(\od s, \od e), \quad t \in [0, T].
	\end{align*}
	Combining Doob's maximal inequality, the inequality $(x_1 + \cdots + x_\ell)^2 \le \ell (x_1^2 + \cdots + x_\ell^2)$, It\^o's isometry with using \eqref{app:growth-condition} we infer that $\Theta$ is an adapted and $\bR^m$-valued c\`adl\`ag process with
	\begin{align*}
		\bE\bb[ \sup_{0 \le t \le T} |\Theta_t|^2\bb] \le 2\bb[ T \int_0^T |\beta(s, 0)|^2 \od s + 4 \ell \sum_{i=1}^m \sum_{j=1}^\ell \int_0^T \nq \int_E |\alpha^{(i, j)}(s, 0, e)|^2\, \mu^{(j)}_s(\od e) \od s\bb] = 2 K^2_0.
	\end{align*}
	By \eqref{app:Lipschitz-SDE} and the square integrability of $Y^0$, together with Fubini's theorem, we get 
	\begin{align}\label{app:eq:constant-0}
		\bE\bb[ \sup_{0 \le t \le T} |Y^1_t - Y^0_t|^2\bb] & \le 2\bE\bb[ \sup_{0 \le t \le T} |Y^1_t - Y^0_t - \Theta_t|^2\bb]  + 2\bE\bb[ \sup_{0 \le t \le T} |\Theta_t|^2\bb]  \notag \\
		& \le 4  T^2 K_\beta^2 \, \bE[|Y_0|^2] + 4 T K^2_\alpha \, \bE[|Y_0|^2] + 4 K^2_0 \notag \\
		& \le K^2_{\eqref{app:eq:constant-0}} ( 1+ \bE[|Y_0|^2]) \tag{1i}
	\end{align}
	for $K^2_{\eqref{app:eq:constant-0}} := 4\max\{K^2_0,   T^2 K_\beta^2 +  T K^2_\alpha\}$. 
	We then deduce by induction using the same arguments as above that $Y^n$ is well-defined and square integrable for all $n \in \bN$. For any $n \ge 1$ and $t \in [0, T]$,
	\begin{align} \label{app:eq:estimate-Y}
		\frac{1}{2}\bE \bb[\sup_{0 \le s \le t} |Y^{n+1}_s - Y^{n}_s|^2\bb] & \le \bE\bb[\sup_{0 \le s \le t}  \bb|\int_0^s [\beta(r, Y^n_{r-}) - \beta(r, Y^{n-1}_{r-})] \od r\bb|^2\bb] \notag \\
		& \quad +  \bE\bb[\sup_{0 \le s \le t} \bb| \int_{(0, s] \times E} [\alpha(r, Y^n_{r-}, e) - \alpha(r, Y^{n-1}_{r-}, e)] M(\od r, \od e)\bb|^2 \bb] \notag \\
		& \le t \bE\bb[\int_0^t |\beta(s, Y^n_{s}) - \beta(s, Y^{n-1}_{s})|^2\od s\bb] \notag \\
		& \quad + 4 \ell \sum_{i=1}^m \sum_{j=1}^\ell \bE\bb[\int_0^t \nq \int_E |\alpha^{(i, j)}(s, Y^n_{s}, e) - \alpha^{(i, j)}(s, Y^{n-1}_{s}, e)|^2 \,  \mu^{(j)}_s(\od e) \od s\bb] \notag \\
		& \le (t K_\beta^2  +  K_\alpha^2) \,  \bE\bb[\int_0^t |Y^n_s - Y^{n-1}_s|^2 \od s\bb],
	\end{align}
	where we use Doob's maximal inequality in the second inequality and use Fubini's theorem in the third inequality.
	Then, for 
	\begin{align}\label{app:constant}
		K^2_{\eqref{app:constant}} : = 2 (T K_\beta^2  +  K_\alpha^2), \tag{2i}
	\end{align}
	one has
	\begin{align*}
		\bE \bb[\sup_{0 \le s \le t} |Y^{n+1}_s - Y^{n}_s|^2\bb] \le K^2_{\eqref{app:constant}} \int_0^t \bE \bb[\sup_{0 \le r \le s} |Y^{n}_r - Y^{n-1}_r|^2\bb] \od s, \quad \forall t \in [0, T], n \ge 1.
	\end{align*}
	Iterating the estimate above, we get for any $n \in \bN$,
	\begin{align}\label{app:eq:estimate-sup-1}
		\bE \bb[\sup_{0 \le s \le T} |Y^{n+1}_s - Y^{n}_s|^2\bb]  \le K^{2n}_{\eqref{app:constant}} \frac{T^n}{n!} \bE \bb[\sup_{0 \le r \le T} |Y^{1}_r - Y^{0}_r|^2\bb].
	\end{align}
	Combining Markov's inequality with \eqref{app:eq:estimate-sup-1}  we get
	\begin{align*}
		\sum_{n = 0}^\infty \bP\bb(\bb\{\sup_{0 \le s \le T} |Y^{n+1}_s - Y^{n}_s| \ge \frac{1}{2^n}  \bb\} \bb) \le \bE \bb[\sup_{0 \le r \le T} |Y^{1}_r - Y^{0}_r|^2\bb]  \sum_{n=0}^\infty \frac{\b(4T K^2_{\eqref{app:constant}}\b)^n}{n!} < \infty.
	\end{align*}
	By the Borel--Cantelli lemma, there is an event $\Omega_0$ with probability one such that for any $\omega \in \Omega_0$, there exists $n_\omega \in \bN$ such that
	\begin{align*}
		\sup_{0 \le s \le T} |Y^{n+1}_s(\omega) - Y^n_s(\omega)| < \frac{1}{2^n}, \quad \forall n \ge n_\omega.
	\end{align*}
	We deduce that $Y^n(\omega)$ converges uniformly on $[0, T]$ for $\omega \in \Omega_0$. We define, for all $t \in [0, T]$,
	\begin{align*}
		Y_t(\omega) : = \begin{cases}
			\lim_{n \to \infty} Y^n_t(\omega)  & \trm{if } \omega \in \Omega_0\\
			0 & \trm{if } \omega \notin \Omega_0.
		\end{cases}
	\end{align*}
	By the uniform convergence and the completeness of the underlying filtration, $Y$ has c\`adl\`ag paths and is adapted.
	
	Now, by the triangle inequality, it follows from \eqref{app:eq:estimate-sup-1} that 
	\begin{align*}
		\bb\|\sup_{0 \le s \le T} |Y_s - Y^{0}_s|\bb\|_{\bfL^2(\bP)}  & = \bb\|\lim_{n \to \infty} \sup_{0 \le s \le T} |Y_s^{n} - Y^{0}_s|\bb\|_{\bfL^2(\bP)} \\
		& \le \sum_{j=0}^\infty K^j_{\eqref{app:constant}} \sqrt{\frac{T^j}{j!}} \, \bb\|\sup_{0 \le r \le T} |Y^{1}_r - Y^{0}_r| \bb\|_{\bfL^2(\bP)} \\
		& \le \sqrt{2 \e^{2K^2_{\eqref{app:constant}} T}} K_{\eqref{app:eq:constant-0}}  \sqrt{ 1+ \bE[|Y_0|^2]},
	\end{align*}
	which then yields
	\begin{align}\label{app:solution-sup-L2}
		\bE\bb[\sup_{0 \le s \le T} |Y_s|^2 \bb]
		& \le 2 \bE[|Y_0|^2] + 4 \e^{2 K^2_{\eqref{app:constant}} T} K^2_{\eqref{app:eq:constant-0}} ( 1+ \bE[|Y_0|^2])  \le K^2_{\eqref{app:eq:constant-3}} ( 1+ \bE[|Y_0|^2]),
	\end{align}
	where
	\begin{align}\label{app:eq:constant-3}
		K^2_{\eqref{app:eq:constant-3}} : = 2 + 4 \e^{2 K^2_{\eqref{app:constant}} T} K^2_{\eqref{app:eq:constant-0}}. \tag{3i}
	\end{align}
	By \eqref{app:Lipschitz-SDE}, \eqref{app:growth-condition} and \eqref{app:solution-sup-L2}, the following process $Z$ is well-defined in $\bfL^2(\bP)$,
	\begin{align*}
		Z_t : = Y_0 + \int_0^t  \beta(s, Y_{s-}) \od s + \int_{(0, t] \times E} \alpha(s, Y_{s-}, e) M(\od s, \od e), \quad t \in [0, T].
	\end{align*}
	We now show that $Z = Y$. Indeed, proceeding as in  \eqref{app:eq:estimate-Y} with $Z$ in place of $Y^n$, we get
	\begin{align*}
		\bb\|\sup_{0 \le s \le T} |Z_s - Y^{n+1}_s|\bb\|_{\bfL^2(\bP)} 
		& \le \sqrt{T} K_{\eqref{app:constant}}  \bb\|\sup_{0 \le s \le T} |Y_s - Y^{n}_s| \bb\|_{\bfL^2(\bP)}\\
		& \le \sqrt{T} K_{\eqref{app:constant}} \sum_{j = n}^\infty K^j_{\eqref{app:constant}} \sqrt{\frac{T^j}{j!}} \, \bb\|\sup_{0 \le r \le T} |Y^{1}_r - Y^{0}_r| \bb\|_{\bfL^2(\bP)} \\
		& \xrightarrow{n \to \infty} 0,
	\end{align*}
	which completes the proof for the existence.
	
	\smallskip
	
	\noindent \textbf{\textit{Uniqueness.}} Assume that $Y$ and $\tilde Y$ solve \eqref{app:SDE-modified}. We follow the same estimates as for \eqref{app:eq:estimate-Y} to get
	\begin{align*}
		\bE\bb[ \sup_{0 \le s \le t} |Y_s - \tilde Y_s|^2 \bb] \le K^2_{\eqref{app:constant}} \int_0^t \bE\bb[ \sup_{0 \le r \le s} |Y_r - \tilde Y_r|^2  \bb] \od s, \quad \forall t \in [0, T].
	\end{align*}
	Note that $\bE\b[\sup_{0 \le s \le T} |Y_s|^2\b] < \infty$ as well as for $\tilde Y$, which verifies $\bE\b[\sup_{0 \le s \le T} |Y_s - \tilde Y_s|^2\b] < \infty$. Applying Gronwall's lemma yields
	\begin{align*}
		\bE\bb[ \sup_{0 \le s \le T} |Y_s - \tilde Y_s|^2 \bb] = 0,
	\end{align*}
	which then shows $Y = \tilde Y$.
\end{proof}

%%%%%%%%%%%%%%%%%%%%%%%%%%%%%%%%%%%%%%%%%%%%%%%%%%%

\bibliographystyle{amsplain}

\end{document}